\newcommand{\defword}[1]{\textbf{\boldmath{#1}}}
\newcommand{\argmin}{\operatornamewithlimits{argmin}}
\newcommand{\argmax}{\operatornamewithlimits{argmax}}
\newtheorem{theorem}{Theorem}
\newtheorem{definition}{Definition}
\begin{document}
\title{A Unified View of Large-scale Zero-sum Equilibrium Computation}
\author{Kevin Waugh\\
{\tt waugh@cs.cmu.edu}\\
Department of Computer Science\\
Carnegie Mellon University\\
5000 Forbes Ave, Pittsburgh, PA, 15213 USA
\And
J. Andrew Bagnell\\
{\tt dbagnell@ri.cmu.edu}\\
The Robotics Institute\\
Carnegie Mellon University\\
5000 Forbes Ave, Pittsburgh, PA, 15213 USA
}
\maketitle
\begin{abstract}
\begin{quote}

The task of computing approximate Nash equilibria in large zero-sum
extensive-form games has received a tremendous amount of attention due mainly
to the Annual Computer Poker Competition.  Immediately after its inception, two
competing and seemingly different approaches emerged---one an application of
no-regret online learning, the other a sophisticated gradient method applied to
a convex-concave saddle-point formulation.  Since then, both approaches have
grown in relative isolation with advancements on one side not effecting the
other.  In this paper, we rectify this by dissecting and, in a sense, unify the
two views.

\end{quote}
\end{abstract}

\section{Introduction}

%
%

The first annual Computer Poker Competition was held in 2007 providing a
testbed for adversarial decision-making with imperfect information.  Though
incredible advancements have been made since then, the solution of an abstract
game still remains a critical component of the top agents.  The strength of
such a strategy correlates with how well the abstract game models the
intractably large full game.  Algorithmic improvements in equilibrium
computation enable teams to solve larger abstract games and thus improve the
overall strength of their agents.

The first agents used off-the-shelf linear programming packages to solve
the abstract games.  Shortly after, two specialized equilibrium-finding
techniques emerged drastically reducing resource requirements by allowing
implicit game representations.  One method, counterfactual regret minimization
(CFR), combines many simple no-regret learners together to minimize overall
regret and as a whole they converge to an equilibrium~\cite{zinkevich08}.  The
other, an application of Nesterov's excessive gap technique (EGT,
\citeyear{nesterov05}), is a gradient method that attacks the convex-concave
saddle-point formulation directly~\cite{gilpin07}.

Currently, the two techniques are thought of as simply different and competing.
Though both improved, the advancements and the methods remained isolated.
At this point, CFR has more widespread adoption due to its simplicity and
the power of the sampling schemes available to it.

In this paper we connect CFR and EGT.  Specifically, both define Bregman
divergences with the same structure.  This viewpoint allows us to make 
important connections between the two as well as current
research on convex optimization and no-regret learning.  In particular, we show
that CFR can be thought of as smoothed fictitious play and its dual weights are
a function of the opponent's average strategy; with the appropriate step-size
the primal iterate converges to the solution (as opposed to the average); and
that a convergence rate of $O(1/T)$ can be achieved while sampling.

\section{Zero-Sum Extensive-form Games}

A \defword{extensive-form game} is a tuple $\Gamma=(N,\mathcal
H,p,\sigma_c,\mathcal I,u)$ (see, {\em e.g.}, \citeauthor{osborne94}).  The
game has $N$ players, the set of which is $[N]$.  The set of
\defword{histories}, $\mathcal H$, form a tree rooted at $\phi$, the empty
history.  For a history $h\in\mathcal H$, we denote the set of available
\defword{actions} by $A(h)$.  For any action $a\in A(h)$, the history $ha
\in\mathcal H$ is a child of $h$.  The tree's leaves, or \defword{terminal}
histories, are denoted by $\mathcal Z\subseteq \mathcal H$.  At a
\defword{non-terminal} history the \defword{player choice function}, $p :
\mathcal H\setminus\mathcal Z\rightarrow \mathcal [N]\cup\{c\}$, determines who
is to act, either a player or nature.  Nature's policy is denoted $\sigma_c$
defines a distribution over actions when it is to act $\sigma_c(\cdot|h) \in
\Delta_{A(h)}$.  The \defword{information partition}, $\mathcal
I=\cup_{i\in[N]}\mathcal I_i$, is a partition the players' histories.  All
histories in an \defword{information set} are indistinguishable to the player
to act.  We have $\forall I\in\mathcal I, h,h'\in I$, that $p(h) = p(h')$ and
$A(h) = A(h')$.  Finally, at a terminal history, $z\in\mathcal Z$, the
\defword{utility for player $i$}, $u_i : \mathcal Z \rightarrow
\mathbb R$, determines the reward for player $i$ reaching terminal
$z$.

Each player plays the game by means of a \defword{behavioral strategy}.  A
behavioral strategy for player $i$, $\sigma_i\in\Sigma_i$, maps histories to
distributions over actions, $\sigma_i(\cdot|h) \in\Delta_{A(h)}$.  When the
player is to act, their choice is drawn from this distribution.  A strategy must
respect the information partition, $\forall I\in\mathcal I_i, h,h'\in I,
\sigma_i(\cdot|h) = \sigma_i(\cdot|h')$.  We call the tuple of $N$ strategies,
$(\sigma_1,\ldots,\sigma_N)$, a \defword{strategy profile}.

There are two additional properties we require of the extensive-form games
we examine.  First, we consider two-player \defword{zero-sum} games.  That is,
$N = 2$ and $u_2(z) = -u_1(z)$; what one player wins the other loses.

Second, we will consider games of \defword{perfect recall}---neither player 
is forced forget any information they once knew.  Mathematically this
requirement is that all histories in an information set share the same sequence
of information sets and actions from the point-of-view of the acting player.

With these additional restrictions, we can conveniently represent a game in its
\defword{sequence form}, $\Gamma=(A,E,F)$.  Following from perfect recall, any
sequence of information set/action pairs, henceforth simply sequence, is
uniquely identified by its final element.  Consequently, we can represent
behavioral strategies as vectors, called \defword{realization plans}, such that
the expected utility of the game is a bilinear product $x^TAy$.  In particular,
a realization plan for the row player is a non-negative vector $x$ indexed by
sequences such that $\sum_{a\in A(I)} x(I, a) = x(\text{parent}(I))$ for all
$I\in\mathcal I_i$ and where $x(\phi) = 1$.  In words, the probability mass
flowing out of an information set equals the probability of playing to reach
that information set.  The constraint on the empty sequence, $\phi$, normalizes
the probability.  We represent these linear equality constraints with the
matrix $E$, {\em i.e.,} $Ex = e_1$, and thus $\Sigma_1 = \{x \mid Ex = e, x\ge
0\}$.  For the column player, we have corresponding concepts $y$, $F$, and
$\Sigma_2$.  The matrix $A$ is the \defword{payoff matrix}.  Entry $a_{i,j}$ is
the expected payoff to the row player over all terminals reach by sequences $i$
and $j$.

A pair of strategies, $(x,y)$ is said to be an \defword{$\varepsilon$-Nash
equilibrium} if neither player can benefit by more than $\varepsilon$ by
deviating to another strategy.  In particular,
\begin{align*}
	x^TAy + \varepsilon & \ge x'Ay, \mbox{~and} & \forall x'\in\Sigma_1\\
	-x^TAy + \varepsilon & \ge -x^TAy'. & \forall y'\in\Sigma_2
\end{align*}
Remarkably, a Nash equilibrium always exists and in our case we can
efficiently find $\varepsilon$-equilibria.  In the next sections
we will discuss and relate large-scale methods to do so.

\subsection{Counterfactual Regret Minimization}

Online learning is a powerful framework for analyzing the performance of
adaptive algorithms.  At time $t\in[T]$, an online algorithm chooses a
policy $x^t\in\Sigma$ and then receives reward vector $u^t\in\mathcal K$.  It
aims to minimize \defword{external regret},
\begin{align*}
\max_{x^*\in\Sigma} \sum_{t=1}^T u^t\cdot x^* - u^t\cdot x^t,
\end{align*}
its utility relative to the best fixed policy in hindsight.  An algorithm is
said to be \defword{no-regret} if its regret grows sublinear in $T$ for any
sequence of $u$'s from bounded set $\mathcal K$.  That is, if the bound on its
time-averaged regret approaches zero in the worst-case~\cite{cesa-bianchi06}. 

There is an important connection between no-regret learning and zero-sum
equilibrium computation.  Two no-regret algorithms in self-play converge
to a Nash equilibrium.  Operationally, the row player gets reward 
reward $u^t = Ay^{t}$ and the column player $u^t = -A^Tx^{t}$.
\begin{theorem}
If two no-regret algorithms play a zero-sum game against one and other for $T$
iterations and have time-averaged regret less than $\varepsilon$, then their
average strategies $(\bar x, \bar y)$ are a $2\varepsilon$-Nash equilibrium.  Here
$\bar x = \sum_{t=1}^T x^t / T$.
\end{theorem}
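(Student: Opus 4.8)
The plan is to translate each player's time-averaged regret into a statement about best-response values against the opponent's average strategy, and then to exploit the bilinearity of the payoff together with the convexity of the strategy sets.

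First I would write out the two regret quantities explicitly. The row player receives $u^t = Ay^t$, so its time-averaged regret is $\frac{1}{T}\max_{x'}\sum_{t=1}^T (x')^T A y^t - \frac{1}{T}\sum_{t=1}^T (x^t)^T A y^t$. Because $(x')^T A y^t$ is linear in $y^t$, the maximand's sum collapses to $T\,(x')^T A \bar y$. Writing $\bar u = \frac{1}{T}\sum_{t=1}^T (x^t)^T A y^t$ for the payoff realized along the actual play, the row player's bound reads $\max_{x'} (x')^T A \bar y - \bar u < \varepsilon$. An identical computation for the column player, whose reward is $u^t = -A^T x^t$, gives $\bar u - \min_{y'} \bar x^T A y' < \varepsilon$.

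The crucial step is then simply to add these two inequalities. The along-play payoff $\bar u$ cancels, leaving
\begin{align*}
\max_{x'}\,(x')^T A \bar y - \min_{y'}\,\bar x^T A y' < 2\varepsilon.
\end{align*}
Finally I would sandwich the realized value $\bar x^T A \bar y$ between these two best-response values. Since $\Sigma_1$ and $\Sigma_2$ are convex---they are defined by linear equalities together with nonnegativity---the averages $\bar x$ and $\bar y$ are themselves feasible, so $\max_{x'}(x')^T A \bar y \ge \bar x^T A \bar y \ge \min_{y'} \bar x^T A y'$. Subtracting $\bar x^T A \bar y$ from the two ends of this chain and invoking the $2\varepsilon$ bound yields both $\max_{x'}(x')^T A \bar y - \bar x^T A \bar y \le 2\varepsilon$ and $\bar x^T A \bar y - \min_{y'}\bar x^T A y' \le 2\varepsilon$, which are exactly the two $\varepsilon$-Nash conditions from the definition with $\varepsilon$ replaced by $2\varepsilon$.

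I expect the only genuinely subtle point to be resisting the temptation to expand $\bar x^T A \bar y$ directly: that quantity is a double sum mixing iterates from different rounds, and it does not telescope or simplify. The entire argument hinges on never touching it and instead controlling it through the sandwich inequality, so that the cross-round terms are handled implicitly by the cancellation of $\bar u$. Everything else is bookkeeping enabled by the two facts that the payoff is bilinear (letting sums over $t$ pass through $A$ onto the averages) and that the feasible sets are convex (guaranteeing the averaged iterates remain valid strategies).
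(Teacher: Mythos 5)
Your proof is correct and follows essentially the same route as the paper's: both add the two regret inequalities so the realized payoff $\bar u$ cancels, use bilinearity to replace sums over iterates with the averages, and then specialize the comparator to $\bar x$ (resp.\ $\bar y$) to extract the two Nash conditions. Your phrasing in terms of best-response values and the sandwich $\max_{x'}(x')^TA\bar y \ge \bar x^TA\bar y \ge \min_{y'}\bar x^TAy'$ is just a cosmetic repackaging of the paper's step of choosing $y'=\bar y$ and $x'=\bar x$ in the combined inequality.
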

\begin{proof}
For any $x'\in\Sigma_1$ and $y'\in\Sigma_2$,
\begin{align*}
\frac{1}{T} \sum_{t=1}^T x'\cdot Ay^t - x^t\cdot Ay^t & \le \varepsilon, ~\mbox{~and}\\
\frac{1}{T} \sum_{t=1}^T (-x^t\cdot Ay') - (-x^t\cdot Ay^t) & \le \varepsilon
\shortintertext{adding the two inequalities together}
\frac{1}{T} \sum_{t=1}^T x'\cdot Ay^t - x^t\cdot Ay' & \le 2\varepsilon \\
\shortintertext{substituting in the definitions of $\bar{x}$ and $\bar{y}$}
x'\cdot A\bar{y} - \bar{x}\cdot Ay' & \le 2\varepsilon\\
\shortintertext{choosing $y' = \bar{y}$, we get for all $x'\in\Sigma_1$}
x'\cdot A\bar{y} & \le \bar{x}\cdot A\bar{y} + 2\varepsilon
\end{align*}
Similarly, if instead we choose $x' = \bar{x}$, we get the second
inequality in the definition of a $2\varepsilon$-Nash.
\end{proof}

The space of mixed strategies, $\Sigma_1$ and $\Sigma_2$, though structured,
is complicated.  \citeauthor{zinkevich08} overcome this and describe a
no-regret algorithm over the space of realization plans.  Their approach
minimizes a new notion of regret---counterfactual regret---at each information
set using simple no-regret algorithms for the probability simplex.  They show
that counterfactual regret bounds external regret, thus their approach computes
an equilibrium in self-play.

The \defword{counterfactual utility} for action $a$ at information set $I$ is
the expected utility given that the player tries to and successfully plays
action $a$.  That is, we weight a terminal by the total probability of the
opponent and chance reaching it, but only by the remaining probability for the
acting player.  Computing counterfactual utilities is done by traversing the
game tree, or one sparse matrix-vector product.

There are numerous no-regret algorithms operating over the probability simplex,
$\Sigma = \Delta$.  Let us present two.  The first,
regret-matching~\cite{hart00}, is also known as the polynomially-weighted
forecaster.  It is the most commonly used algorithm for zero-sum equilibrium
computation.  Notationally, we call $r^t = u^t - u^t\cdot x^t
e$ the \defword{instantaneous regret} and $R^t = \sum_{i=1}^t
r^i$ the \defword{cumulative regret} at time $t$.  Let $L = \sup_{u\in\mathcal
K} \Vert u \Vert_\infty$.

\begin{definition}[Regret-matching]
Choose $x^{t+1} \propto (R^t)_+$.  
\end{definition}
Here, $(x)_+ = \max\{0,x\}$.  
\begin{theorem}[from \cite{hart00}]
If $x^t$'s are chosen using regret-matching, then the external regret is no
more than $L\sqrt{NT}$.
\end{theorem}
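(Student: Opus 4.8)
The plan is to run the standard Blackwell-style potential argument, tracking the squared Euclidean norm of the positively-truncated cumulative regret, $\Phi^t = \Vert (R^t)_+ \Vert_2^2$, and to show it grows at most linearly in $t$. First I would reduce the external regret to this potential. Because each reward functional $u^t\cdot(\cdot)$ is linear, the maximizer over the simplex $\Sigma=\Delta$ is attained at a vertex $e_a$, so the external regret equals $\max_a \sum_{t=1}^T (u^t_a - u^t\cdot x^t) = \max_a R^T_a$. This is at most $\Vert (R^T)_+\Vert_\infty \le \Vert (R^T)_+\Vert_2 = \sqrt{\Phi^T}$, so it suffices to prove $\Phi^T \le N L^2 T$.

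The heart of the argument is a per-step bound $\Phi^t \le \Phi^{t-1} + \Vert r^t\Vert_2^2$, which I would obtain from two facts. The first is the coordinatewise inequality $((a+b)_+)^2 \le ((a)_+ + b)^2$; applying it with $a = R^{t-1}_j$, $b = r^t_j$ to each coordinate of $R^t = R^{t-1} + r^t$ and summing gives $\Phi^t \le \Phi^{t-1} + 2\,(R^{t-1})_+\cdot r^t + \Vert r^t\Vert_2^2$. The second—and the crucial one—is the Blackwell orthogonality $(R^{t-1})_+\cdot r^t = 0$. This is exactly where the regret-matching rule enters: since $x^{t}\propto (R^{t-1})_+$, we may write $(R^{t-1})_+ = c\,x^t$ with $c = \Vert (R^{t-1})_+\Vert_1$, and then substituting $r^t = u^t - (u^t\cdot x^t)\,e$ and using $x^t\cdot e = 1$ makes the two resulting terms cancel exactly. (When $(R^{t-1})_+ = 0$ the cross term vanishes trivially and $x^{t+1}$ may be taken arbitrary.)

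Finally I would bound $\Vert r^t\Vert_2^2$ by $N L^2$: each coordinate $r^t_a = u^t_a - u^t\cdot x^t$ is the difference between an entry $u^t_a$ and the convex average $u^t\cdot x^t$ of the entries, both lying in the reward range controlled by $L = \sup_{u\in\mathcal K}\Vert u\Vert_\infty$, so $|r^t_a|$ is at most the range and $\Vert r^t\Vert_2^2 = \sum_a (r^t_a)^2 \le N L^2$. Telescoping $\Phi^t \le \Phi^{t-1} + N L^2$ from $\Phi^0 = 0$ yields $\Phi^T \le N L^2 T$, which combined with the reduction above gives external regret $\le \sqrt{\Phi^T} \le L\sqrt{NT}$.

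The main obstacle I anticipate is verifying the Blackwell orthogonality cleanly—keeping the normalizing constant $c$ and the role of the all-ones vector $e$ straight so that the cross term is \emph{exactly} zero rather than merely nonpositive—since once that identity is in hand the remaining steps are a routine telescoping and a crude norm bound. The only other point demanding a little care is pinning down the constant in $\Vert r^t\Vert_2^2$, which depends on whether one measures the reward range as $L$ or $2L$; this affects the leading constant but not the $\sqrt{NT}$ rate.
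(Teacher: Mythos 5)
The paper states this result without proof, importing it directly from Hart and Mas-Colell, so there is no in-paper argument to compare against. Your proposal is the standard (and correct) Blackwell-condition proof: the reduction of external regret to $\Vert (R^T)_+\Vert_2$, the coordinatewise inequality, the exact cancellation $(R^{t-1})_+\cdot r^t = 0$ forced by $x^t\propto(R^{t-1})_+$, and the telescoping are all sound, and this is essentially the original argument behind the cited theorem. The one loose end is the one you flag yourself: with $L=\sup_u\Vert u\Vert_\infty$ each coordinate of $r^t$ is only bounded by $2L$ (e.g., $u=(L,-L)$, $x=(1,0)$ gives $\Vert r^t\Vert_2^2=4L^2>NL^2$ for $N=2$), so the argument as written yields $2L\sqrt{NT}$; recovering the stated constant requires reading $L$ as the payoff range rather than the sup-norm bound. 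This affects only the constant, not the rate, and does not undermine the proof.
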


The second algorithm is Hedge~\cite{freund97}. It is also known as the
exponentially-weighted forecaster, exponential weights, or weighted
majority~\cite{littlestone94}.  
\begin{definition}[Hedge]
Choose $x^{t+1} \propto \exp(\eta R^t)$.
\end{definition}
\begin{theorem}[from \cite{freund97}]
If $x^t$'s are chosen using Hedge with $\eta = \sqrt{2\log(N) / T}/L$,
then the external regret is no more than $L\sqrt{2T\log N}$.
\end{theorem}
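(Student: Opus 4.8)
The plan is to run the standard exponential-weights potential argument: track the log of the unnormalized weight sum and sandwich it between a lower bound in terms of the regret and an upper bound built up one step at a time.

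First I would reduce the regret to a coordinate maximum. Since $\Sigma = \Delta$, the comparator $x^*$ maximizing the external regret is a vertex $e_j$, so the regret equals $\max_j \sum_{t=1}^T (u^t_j - u^t\cdot x^t) = \max_j R^T_j$, and it suffices to bound $\max_j R^T_j$. To do so, define the weight sum $W^t = \sum_{j=1}^N \exp(\eta R^{t-1}_j)$, so that Hedge plays $x^t_j = \exp(\eta R^{t-1}_j)/W^t$ and, since $R^0 = 0$, we have $W^1 = N$. The lower bound is immediate from keeping only the largest summand: $\log W^{T+1} \ge \eta\max_j R^T_j$.

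For the upper bound I would control the one-step ratio. Because $R^t = R^{t-1} + r^t$, factoring $\exp(\eta R^{t-1}_j)$ out of the numerator gives $W^{t+1}/W^t = \sum_j x^t_j\exp(\eta r^t_j)$, the moment generating function of the instantaneous regret under the play distribution $x^t$. The key observation is that this distribution centers the regret: $\sum_j x^t_j r^t_j = u^t\cdot x^t - (u^t\cdot x^t) = 0$, using $r^t = u^t - (u^t\cdot x^t)e$ and $\sum_j x^t_j = 1$. Moreover the spread of the values taken is $\max_j r^t_j - \min_j r^t_j = \max_j u^t_j - \min_j u^t_j \le 2L$. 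Applying Hoeffding's lemma to a mean-zero variable on an interval of width at most $2L$ then yields $W^{t+1}/W^t \le \exp(\eta^2 L^2/2)$, hence $\log(W^{t+1}/W^t)\le \eta^2 L^2/2$.

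Telescoping this over $t = 1,\dots,T$ and adding $\log W^1 = \log N$ gives $\log W^{T+1}\le \log N + T\eta^2 L^2/2$; combined with the lower bound this is $\max_j R^T_j \le \log N/\eta + T\eta L^2/2$. Substituting $\eta = \sqrt{2\log N/T}/L$ balances the two terms, each equal to $L\sqrt{T\log N/2}$, summing to $L\sqrt{2T\log N}$, as claimed. I expect the one step needing care to be the width estimate in the Hoeffding bound: it is tempting to bound each $r^t_j$ in $[-2L,2L]$ and use a width of $4L$, which inflates the constant, whereas Hoeffding only needs the width of the support actually realized, and translating by the common term $u^t\cdot x^t$ shows this is the range of the reward coordinates, at most $2L$. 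Pinning down this factor is exactly what makes the constant come out to $\sqrt{2T\log N}$ rather than something larger.
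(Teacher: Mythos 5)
The paper states this theorem as an import from Freund and Schapire and gives no proof of its own, so there is nothing internal to compare against; judged on its own terms, your argument is the standard exponential-weights potential proof and it is correct. Every step checks out: the reduction of the regret to $\max_j R^T_j$ via linearity over the simplex, the identity $W^{t+1}/W^t = \sum_j x^t_j\exp(\eta r^t_j)$, the exact centering $\sum_j x^t_j r^t_j = 0$, the width-$2L$ Hoeffding bound (which is indeed the step that delivers the constant $\sqrt{2}$ rather than something larger), and the final balancing of $\log N/\eta$ against $T\eta L^2/2$ at the stated $\eta$.
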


For equilibrium-finding, the regret-matching algorithm of \citeauthor{hart00}
is common place.  Though Hedge has a slightly better theoretical bound than
regret-matching, it is computationally more expensive due to $\exp$ and
choosing $\eta$ can be tricky in practice.  As we will see, the use of Hedge
here leads to an insightful connection between the two equilibrium-finding
approaches in question.  We show the counterfactual regret update
in Algorithm~\ref{alg:cfr}.

\begin{algorithm}[t]
\begin{algorithmic}
	\Function{$\text{updateRegret}_I$}{R, g}
	\For{$a\in A(I)$}
	  \For{$I'\in \text{child}(I,a)$}
	    \State{$u', R_{I'} \gets $ \Call{$\text{updateRegret}_{I'}$}{$R_{I'}, g_{I'}$}}
	    \State{$g_a \gets g_a + u'$}
	  \EndFor
	\EndFor
	\State{$x_a \propto \exp(R_{I,a})$}
	\For{$a\in A(I)$}
	  \State{$R_{I,a} \gets R_{I,a} + g_a - g\cdot x$}
	\EndFor
	\State \Return{$g\cdot x, R$}
      \EndFunction
      \Function{updateRegret}{R, g}
        \For{$I\in \text{child}(\Phi)$}
	  \State{$\_, R_{I} \gets $ \Call{$\text{updateRegret}_{I'}$}{$R_I,g_I$}}
        \EndFor
      \EndFunction
\end{algorithmic}
\caption{CFR Update with Hedge}
\label{alg:cfr}
\end{algorithm}

\section{Connection to Convex Optimization}

A different view from the no-regret learning approach is simply to write
the equilibrium computation as a non-smooth minimization.  This minimization
is convex, as both the objective and the set of realization plans are.
\begin{theorem}
Let $f(x) = \max_{y\in\Sigma_2} -x^TAy$ then $f$ is convex on $\Sigma_1$.
Furthermore, if $x^*\in\argmin_{x\in\Sigma_1} f(x)$ then $x^*$ is a minimax
optimal strategy for the row player.
\end{theorem}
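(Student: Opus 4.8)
The plan is to dispatch the two claims in turn, since they rest on different tools. For convexity, I would note that $f$ is a pointwise supremum of affine maps. Fix any $y\in\Sigma_2$; then $x\mapsto -x^TAy$ is linear in $x$, hence convex, and its domain $\Sigma_1$ is convex, being cut out by the linear constraints $Ex=e_1$ and $x\ge 0$. A standard fact of convex analysis is that the pointwise supremum of any family of convex functions is convex, so $f(x)=\sup_{y\in\Sigma_2}-x^TAy$ is convex on $\Sigma_1$. I expect no difficulty here.

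For the optimality claim, the key step is to read the minimization of $f$ as a maximin problem. Writing $f(x)=-\min_{y\in\Sigma_2}x^TAy$ gives
\[
\min_{x\in\Sigma_1} f(x) = -\max_{x\in\Sigma_1}\min_{y\in\Sigma_2} x^TAy,
\]
so a minimizer $x^*$ of $f$ is exactly a strategy maximizing the row player's worst-case payoff $\min_y x^TAy$. Let $v$ denote this optimal security value; then $x^{*T}Ay\ge v$ for every $y\in\Sigma_2$.

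To turn this security guarantee into minimax optimality, I would invoke the minimax theorem: because the payoff is bilinear and $\Sigma_1,\Sigma_2$ are convex and compact,
\[
\max_{x\in\Sigma_1}\min_{y\in\Sigma_2} x^TAy = \min_{y\in\Sigma_2}\max_{x\in\Sigma_1} x^TAy = v.
\]
Pairing $x^*$ with any minimax strategy $y^*\in\argmin_{y}\max_{x} x^TAy$ then yields a saddle point: one checks $x'^TAy^*\le v\le x^{*T}Ay'$ for all $x'\in\Sigma_1$ and $y'\in\Sigma_2$, which are precisely the two inequalities in the definition of a Nash equilibrium with $\varepsilon=0$. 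Hence $x^*$ achieves the game value against every column deviation and is minimax optimal for the row player. The main obstacle is bookkeeping: keeping the sign flips straight through $f(x)=-\min_y x^TAy$, and making the appeal to the minimax theorem explicit, so that \emph{maximizing the security value} and \emph{being an equilibrium strategy} are identified rather than conflated.
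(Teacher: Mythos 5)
Your proof is correct, but it takes a genuinely different route from the paper's on both halves. For convexity, you use the pointwise-supremum-of-affine-functions fact, which is the most elementary argument available; the paper instead exhibits an explicit subgradient, verifying $f(x') - f(x) \ge (-Ay')\cdot(x'-x)$ where $y'$ is the column player's best response to $x$ (a Danskin-type argument). The paper's choice is deliberate: the identified subgradient $-Ay'$ is exactly the computational object the rest of the paper runs on (best-response calculations feeding dual averaging, CFR-BR, etc.), so the proof does double duty, whereas your argument establishes convexity without producing the subgradient. For the optimality claim, both proofs hinge on the same observation that $-f(x) = \min_{y\in\Sigma_2} x^TAy$, so minimizing $f$ maximizes the row player's security value; the paper stops there, taking that as the definition of minimax optimality, while you go one step further and invoke the minimax theorem to pair $x^*$ with a $y^*$ and verify the saddle-point inequalities. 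That extra step is sound and gives the stronger Nash-equilibrium conclusion, but it is not needed for the statement as written, and it imports the minimax theorem where the paper's argument is self-contained.
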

\begin{proof}
First, let $y'\in\argmax_{y\in\Sigma_2} -x^TAy$.  Claim
$f'(x) = -Ay' \in \partial f(x)$.  For any $x'\in\Sigma_1$,
\begin{align*}
f(x') - f(x) & = \max_{y\in\Sigma_2} -x'\cdot Ay - \max_{y\in\Sigma_2} -x\cdot Ay\\
	     & \ge -x'\cdot Ay' - \max_{y\in\Sigma_2} -x\cdot Ay\\
	     & = -x'\cdot Ay' + x\cdot Ay'\\
	     & = (-Ay')\cdot(x' - x)
\end{align*}
This is an instantiation of Danskin's theorem~\cite{bertsekas99}. 
By the optimality of $x^*$, for any $\hat{x}\in\Sigma_1$:
\begin{align*}
x^*\cdot Ay^* = -f(x^*) & \ge -f(x') = \hat{x}\cdot A\hat{y}.
\end{align*}
where $y^*$ maximizes $-x^*\cdot Ay$ and $\hat{y}$ maximizes $-\hat{x}\cdot Ay$.
\end{proof}
Note, the subgradient computation is precisely $y$'s best response. 
The CFR-BR technique is exactly CFR applied to this non-smooth 
optimization~\cite{johanson12}.

As $f$ is convex, and we can efficiently evaluate its subgradient via a best
response calculation, we can use any subgradient method to find an
$\varepsilon$-equilibrium strategy.  Unfortunately, the most basic approach, the
projected subgradient method, requires a complicated and costly projection onto
the set of realization plans.  We can avoid this projection by employing the
proper Bregman divergence.  In particular, if $h : \mathcal D \rightarrow
\mathbb R$ is a strongly convex such that we can quickly solve the
minimization
\begin{align}
\argmin_{x\in\mathcal D} \;\;g \cdot x + h(x)
\end{align}
we say $h$ fits $\mathcal D$.  In these cases, we can often use $h$ in place of
the squared $l_2$ distance and avoid any projections.  

\citeauthor{hoda08} (\citeyear{hoda08}) describe a family of diverences, or
distance generating functions, for the set of realization plans.  They
construct their family of distance generating functions inductively.  One such
$h$ is as follows:
\begin{align*}
	h_{I}(x,y) & = \sum_{a\in A(I)} x_{I,a}\log x_{I,a} + & \forall I\in\mathcal I_i\\
		 & \sum_{I'\in \text{child}(I,a)} x_{I,a}\left[ h_{I'}(y_{I'}/x_{I,a}) \right] \\
	h(x) & = \sum_{I\in\text{child}(\phi)} h_I(x_I)
\end{align*}
A few things worth noting.  First, by $\text{child}(x)$ we mean the set of
information sets that are immediate children of sequence $x$.  Second, we
slightly abuse of notation above in that $h_I(x,y)$ depends on the immediate
realization weights, $x$, and all child weights $y$.  We denote the child
weights belonging to information set $I'$ as $y_{I'}$.  Second, due to perfect
recall, this recursion does bottom out; there are information sets with no
children and there are no cycles.  At a terminal information set, $h_I$ is the
negative entropy function.  The recursion makes use of the dilation or
perspective operator.

\citeauthor{gordon06} (\citeyear{gordon06}) introduces the same function in his
supplementary materials, but does not provide a closed-form solution to its
minimization.  The closed-form solution is shown in Algorithm~\ref{alg:prox}.
Note that this algorithm is the same as a best response calculation where we
replace the max operator with the softmax operator.  The normalization step
afterwards restores the sequence form constraints, {\em i.e.}, converts from a
behavioral strategy to a realization plan.

\begin{algorithm}[t]
\begin{algorithmic}
      \Function{minimize $h_I$}{u}
	\For{$a\in A(I)$}
	  \For{$I'\in \text{child}(I,a)$}
	    \State{$u', x_{I'} \gets $ \Call{minimize $h_{I'}$}{$u_{I'}$}}
	    \State{$u_{I,a} \gets u_{I,a} + u'$}
	  \EndFor
	\EndFor
	\State{$x_{I,a} \propto \exp(u_{I,a})$} \Comment{softmax instead of max}
	\State \Return{$u_I\cdot x, x$}
      \EndFunction
      \Function{$\text{normalize}_I$}{x, Z}
	\For{$a\in A(I)$}
	  \State{$x_{I,a} \gets x_{I,a} / Z$}
	  \For{$I'\in \text{child}(I,a)$}
	    \State \Call{$\text{normalize}_{I'}$}{$x_{I'}, x_{I,a}$}
	  \EndFor
	\EndFor
      \EndFunction
      \Function{minimize $h$}{g}
        \For{$I\in \text{child}(\Phi)$}
	  \State{$\_, x_{I} \gets $ \Call{minimize $h_{I'}$}{$-g_I$}}
	  \State \Call{$\text{normalize}_I$}{$x, 1$}
        \EndFor
	\State \Return{$x$}
      \EndFunction
\end{algorithmic}
\caption{Smoothed Best Response}
\label{alg:prox}
\end{algorithm}

The computational structure of Algorithm~\ref{alg:cfr} and \ref{alg:prox} are
identical.  CFR, too, must normalize when computing an equilibrium to properly
average the strategies.  Both use the softmax operator to on the expected
utility to define the current policy.  In particular, if $R = 0$, then the
computation is equivalent with the exception of the initial sign of $g$.  

\section{Dual Averaging}

Nesterov's dual averaging (\citeyear{nesterov09}) is a subgradient method, and
thus we can use to find equilibria.  As we shall see, it has close connections
to counterfactual regret when equipped with \citeauthor{hoda08} distance function. 

Dual averaging defines two sequences, $x^t$, the query points, and $g^t$, the
corresponding sequence of subgradients.  The averages $\bar{x}$ and $\bar{g}$
converge to a primal-dual solution.
\begin{definition}
Let $\beta^t>0$ be a sequence of step sizes and $h : \mathcal D
\rightarrow \mathbb R$ be a strongly convex distance generating function.
The sequence $x^t$ is given by
\begin{align}
\nonumber x^{t+1} & = \argmin_{x\in\mathcal D} \;\;\frac{1}{t}\sum_{i=1}^t\left[ f(x^i) + f'(x^i)\cdot(x - x^i)\right] + \beta^t h(x)\\
& = \argmin_{x\in\mathcal D} \;\;\bar{g}^t\cdot x + \beta^t h(x)
\end{align}
\end{definition}
The method convergences for step sizes $O(\sqrt{t})$, or for an appropriately
chosen constant step size when the number if iterations is known ahead of time.

Interestingly, Hedge and regret-matching over the simplex are operationally
equivalent to dual averaging.
\begin{theorem}
Hedge is equivalent to Dual Averaging over the simplex with $\sigma(x) =
\sum_{i=1}^n x_i\log x_i$, $\beta^t = 1/(t\eta)$ and $g^t = -u^t$.
\end{theorem}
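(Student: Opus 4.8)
The plan is to unfold the dual averaging iterate under the stated substitutions and show, term by term, that it collapses to the normalized exponential weighting that defines Hedge. Throughout I write $e$ for the all-ones vector and identify the distance function $h$ with $\sigma$, the negative entropy. I will also recall from the definition of instantaneous regret that the cumulative regret satisfies $R^t = \sum_{i=1}^t r^i = \sum_{i=1}^t u^i - \left(\sum_{i=1}^t u^i\cdot x^i\right) e$.

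First I would write out the dual averaging update with $h = \sigma$, $\beta^t = 1/(t\eta)$, and $g^t = -u^t$, so that $\bar g^t = -\frac{1}{t}\sum_{i=1}^t u^i$ and
\begin{align*}
x^{t+1} = \argmin_{x\in\Delta} \;\; -\frac{1}{t}\sum_{i=1}^t u^i\cdot x + \frac{1}{t\eta}\sum_{j} x_j\log x_j.
\end{align*}
Because the argmin is invariant under multiplication of the objective by the positive constant $t\eta$, this is equivalent to minimizing $-\eta\left(\sum_{i=1}^t u^i\right)\cdot x + \sum_j x_j\log x_j$ over the simplex.

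Next I would eliminate the gap between $\sum_i u^i$ and $R^t$. The two differ exactly by the term $\left(\sum_i u^i\cdot x^i\right)e$, and since $e\cdot x = 1$ is constant on $\Delta$, this contributes only an additive constant to the objective and hence leaves the minimizer unchanged. Thus $x^{t+1} = \argmin_{x\in\Delta} -\eta R^t\cdot x + \sum_j x_j\log x_j$. Finally I would solve this entropic minimization in closed form: forming the Lagrangian for the single constraint $e\cdot x = 1$ (the nonnegativity constraints are inactive because the entropy term forces $x_j>0$), the stationarity condition reads $-\eta R^t_j + \log x_j + 1 + \lambda = 0$, giving $x_j \propto \exp(\eta R^t_j)$; strong convexity of the negative entropy certifies this stationary point is the unique global minimizer. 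This is precisely the Hedge update $x^{t+1}\propto\exp(\eta R^t)$, establishing the equivalence.

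The main obstacle will be the bookkeeping in the third step: one must verify that the discrepancy between accumulating raw utilities $u^i$ and accumulating instantaneous regrets $r^i$ is genuinely a scalar multiple of $e$, and therefore invisible to the simplex-constrained argmin. Were this not exactly a multiple of $e$, the two methods would produce different iterates rather than agreeing up to this harmless shift. The remaining ingredients—scaling invariance of the argmin and the closed-form softmax solution of the entropy-regularized linear program—are routine.
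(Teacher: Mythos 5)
Your proposal is correct and follows essentially the same route as the paper: both unfold the dual averaging iterate, reduce it to the entropy-regularized linear minimization over the simplex whose solution is the softmax, and observe that the discrepancy between accumulated utilities and cumulative regret is a multiple of $e$ and hence harmless. The only cosmetic differences are that you derive the softmax via the Lagrangian rather than via the gradient of the convex conjugate (log-sum-exp), and you absorb the $\left(\sum_i u^i\cdot x^i\right)e$ shift in the objective before solving rather than after exponentiating.
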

\begin{proof}
The dual averaging minimization can be written as the convex conjugate of the
negative entropy~\cite{boyd04}:
\begin{align*}
\lefteqn{\min_{x\in\Delta} \;\;\bar{g}^t\cdot x + \beta^t\sum_{i=1}^n x_i\log x_i} \\
  & = -\beta^t \max_{x\in\Delta} \;\; -\bar{g}^t\cdot x/\beta^t - \sum_{i=1}^n x_i\log x_i \\
  & = -\beta^t \log\sum_{i=1}^n\exp\left(-\hat{g}_i / \beta^t \right) 
\end{align*}
The gradient of the conjugate minimizes the objective~\cite{rockafellar70},
\begin{align*}
x^{t+1} & \propto \exp\left(-\bar{g}^t / \beta^t\right) 
	       = \exp\left(-\eta\sum_{i=1}^t g^t\right) \\ 
	       & \propto \exp\left(\eta\sum_{i=1}^t u^t - u^t\cdot x^te\right) 
	       = \exp\left(\eta R^t\right) 
\end{align*}
The last step follows from $\exp(x) \propto \exp(b)\exp(x) = \exp(be + x)$ for
any vector $x$ and constant $b$.
\end{proof}
In particular, note subtracting the expected utility in the regret update does
not at all alter the iterates.  We need only accumulate counterfactual utility
when using Hedge.

\begin{theorem}
Regret-matching is equivalent to Dual Averaging over the simplex with
$\sigma(x) = \Vert x_+ \Vert_2^2 / 2$, $\beta^t = e^T R^t_+ / t$ and $g^t =
u^t\cdot x^te - u^t$.
\label{thm:rmda}
\end{theorem}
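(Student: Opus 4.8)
The plan is to reduce the dual averaging update to a Euclidean projection onto the simplex, and then to show that the prescribed step size $\beta^t$ forces the projection threshold to be exactly zero, so that the projection collapses to the normalized positive part of the cumulative regret, which is precisely the regret-matching iterate.

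First I would substitute the stated $g^t = u^t\cdot x^t e - u^t = -r^t$, so that the averaged subgradient becomes $\bar{g}^t = \frac{1}{t}\sum_{i=1}^t g^i = -R^t/t$. Plugging this and $\sigma(x) = \Vert x_+\Vert_2^2/2$ into the dual averaging minimization, and noting that on the simplex $x\ge 0$ so that $x_+ = x$, the iterate $x^{t+1}$ minimizes $-\frac{1}{t}R^t\cdot x + \frac{\beta^t}{2}\Vert x\Vert_2^2$ over $\Delta$. Writing $c = e^T R^t_+$ so that $\beta^t = c/t$, I would multiply the objective by $t>0$ (which leaves the minimizer unchanged) and complete the square, obtaining
\[
-R^t\cdot x + \tfrac{c}{2}\Vert x\Vert_2^2 = \tfrac{c}{2}\,\Vert x - R^t/c\Vert_2^2 \;+\; \mathrm{const}.
\]
Hence $x^{t+1}$ is exactly the Euclidean projection of the point $v = R^t/c$ onto the simplex $\Delta$.

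Next I would invoke the standard characterization of simplex projection: $\mathrm{proj}_\Delta(v) = (v - \tau e)_+$, where the scalar $\tau$ is the unique threshold for which $\sum_i (v_i - \tau)_+ = 1$. The key step---and the one place where the particular choice of $\beta^t$ does the real work---is to verify that $\tau = 0$ is that threshold. Indeed, $\sum_i (v_i)_+ = \sum_i (R^t_i)_+ / c = \Vert R^t_+\Vert_1 / (e^T R^t_+) = 1$, simply by the definition of $e^T R^t_+$ as the sum of the positive parts. Because $\tau\mapsto \sum_i (v_i-\tau)_+$ is continuous and strictly decreasing wherever it is positive, this threshold is unique, so $\tau=0$ is forced. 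The projection therefore returns $(v)_+ = R^t_+/(e^T R^t_+)$, which is exactly $x^{t+1}\propto (R^t)_+$, the regret-matching distribution.

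I expect the remaining points to be bookkeeping rather than conceptual. The main obstacle is recognizing the reformulation as a projection and seeing why $\beta^t = e^T R^t_+/t$ is the normalization that makes the threshold vanish; once $\tau=0$ is identified, the equivalence is immediate. The only loose end is the degenerate case $R^t_+ = 0$, where $\beta^t=0$ and both the projection and regret-matching are undefined (and are handled by the usual convention of playing uniformly); I would flag this separately rather than let it obscure the main line of argument.
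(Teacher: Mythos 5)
Your proof is correct and arrives at the same closed form $x^{t+1} = R^t_+/(e^T R^t_+)$ via the same essential mechanism: the choice $\beta^t = e^T R^t_+/t$ is precisely what makes the positive part of $-\bar{g}^t/\beta^t$ sum to one. The packaging differs, though. The paper drops the normalization constraint entirely, minimizes $\bar{g}^t\cdot x + \beta^t\Vert x_+\Vert_2^2/2$ coordinate-wise over the orthant $x\ge 0$ to obtain $(-\bar{g}^t)_+/\beta^t$, and then observes that this point already lies on the simplex, so it solves the constrained problem as well. You instead keep the simplex constraint, complete the square to recognize the update as the Euclidean projection of $R^t/(e^T R^t_+)$ onto $\Delta$, and invoke the thresholding characterization of that projection, showing the threshold is forced to zero. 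Both arguments ultimately verify the identical identity $\sum_i (R^t_i)_+/(e^T R^t_+) = 1$; yours imports the (standard but heavier) projection lemma, while the paper's relax-and-check argument is more elementary and makes more transparent the point emphasized in the surrounding text---that the step size is engineered specifically so that no projection is ever needed. Your explicit flagging of the degenerate case $R^t_+ = 0$ (where $\beta^t = 0$ and one falls back to the uniform policy) is a detail the paper's proof silently omits, and is worth keeping.
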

\begin{proof}
Consider the dual averaging minimization without the normalization constraint:
\begin{align*}
x^{t+1} & = \argmin_{x\ge 0} \;\;\bar{g}^t\cdot x + \beta^t \Vert x_+\Vert_2^2 /2 \\
	& = \frac{(-\bar{g}^t)_+}{\beta^t} = \frac{t(\frac{1}{t}\sum_{i=1}^t u^i - u^i\cdot x^ie)_+}{e^T R^t_+}\\
	& = \frac{R^t_+}{e^T R^t_+}
\end{align*}
Note that by construction $x^{t+1}$ sums to one, therefore the normalization constraint holds.
In order for dual averaging to converge, we need $1/ \beta^t \in O(\sqrt{T})$.
This follows from the no-regret bound on regret-matching, $e^T R^T_+ \le
L\sqrt{NT}$.
\end{proof}
Following from Theorem~\ref{thm:rmda}, we see that CFR with regret-matching is
dual averaging with a \citeauthor{hoda08}-style divergence built on $\Vert
x_+\Vert^2_2$.  Note that the step sizes must be chosen appropriately to avoid
projection.  That is, this divergence may not be appropriate for other gradient
methods that rely on more stringent step size choices.

Let us explicitly instantiate dual averaging for solving the convex-concave
equilibrium saddle-point optimization.
\begin{align*}
	x^{t+1} & = \argmin_{x\in\Sigma_1} \frac{1}{t}\sum_{i=1}^t -Ay^i + \beta^t h(x), \\
	& = \argmin_{x\in\Sigma_1} -A\bar{y}^t + \beta^t h(x), \\
	y^{t+1} & = \argmin_{y\in\Sigma_2} \frac{1}{t}\sum_{i=1}^t A^Tx^i + \beta^t h(y). \\
	& = \argmin_{y\in\Sigma_2} A^T\bar{x}^t + \beta^t h(y).
\end{align*}
In words, dual averaging applied to the saddle-point problem can be thought of
as fictitious play with a smoothed best response as opposed to an actual best
response~\cite{brown51}.  

Dual averaging and Hedge are operationally equivalent at terminal information
sets, that is, ones where all sequences have no children.  At a non-terminal
information set, dual averaging responds as if its current policy is played
against the opponent's average strategy in future decisions.  Counterfactual
regret minimization, on the other hand, plays against the opponent's current
policy.  The no-regret property of the algorithm guarantees that these two
quantities remain close.  In rough mathematical terms, we have $\sum_{t=1}^T
x^t\cdot A y^t \approx x^T\cdot \sum_{t=1}^T Ay^t$ within $O(\sqrt{T})$.
Conceptually, we can conclude that counterfactual regret minimization, too, is
roughly equivalent to smoothed fictitious play.

Operationally, the difference is counterfactual regret minimization propagates
{\em and} accumulates the expected utilities from the child information sets in
the regrets.  Dual averaging, in spirit, is lazy and re-propagates these
utilities on each iteration.  We note that this re-propagation is not
algorithmically necessary when we have sparse stochastic updates as the
expected value of an information set only changes if play goes through it.
That is, we can memoize and update this value in a sparse fashion.

This understanding of CFR hints at why it outperforms its bound in practice and
why unprincipled speedups may indeed be reasonable.  In particular, we can
achieve faster rates of convergence, $O(L/T)$ as opposed to $O(L/\sqrt{T})$,
when minimizing smooth functions with gradient descent and the appropriate step
size.  Two no-regret learners in self-play are essentially smoothing the
non-smooth objective for one and other.  The smooth objective itself is
changing from iteration to iteration, but this suggests we can choose more
aggressive step sizes than necessary.  Further evidence of this is that the
convergence behavior for CFR-BR, a minimization of a known non-smooth
objective, is exhibits more volatile behavior that is closer to the true CFR
regret bound.

Dual averaging with the \citeauthor{hoda08} divergence is itself a
no-regret learner over the set of realization plans~\cite{xiao10}.  The regret
bound itself is a function of the strong convexity parameter of the distance
function.  The bound on which appears to be quite loose.  The above analysis
suggests that it should be similar, or perhaps in some cases better, than the
counterfactual regret bound on external regret.  This is not shocking as
counterfactual regret minimization is agnostic to no-regret algorithm in use.

\section{Initialization with a Prior}

When computing an equilibrium or in an online setting, typically the initial
strategy is uniformly random.  Though the initial error does drop quite
rapidly, it is often the case that we have available good priors available.
Particularly in an online setting, it is preferable start with a good strategy
instead of essentially learning both how to play the game and how to exploit an
opponent at the same time.  From the optimization perspective, we now discuss
sound approaches.

First, let us investigate how to initialize the dual weights---the cumulative
counterfactual regrets.  In dual averaging, the dual weights to $x$, $\bar{g} =
A\bar{y}$, is the utilities to $x$ of the opponent's optimal strategy.  If we
have guesses at $x^*$ and $y^*$, we can use those to initialize the dual
weights.  This view of the dual weights is a simplification what is being done
by~\citeauthor{brown14} (\citeyear{brown14}), where counterfactual regret
minimization is started from a known good solution.  From the convex
optimization view-point this is immediate.

An appealing property of this is that the representation of the opponent's
policy need not be known.  That is, so long as we can play against it we can
estimate the dual weights.  In some cases, the variance of this estimate may be
quite poor.  With more structured representations, or domain knowledge it may
be possible to greatly improve the variance.  

It is quite common when considering convex optimization problems to {\em
recenter} the problem.  In particular, note that dual averaging starts with the
policy $x_0 = \argmin_{x\in\mathcal D} h(x)$.  By instead choosing $h'(x) =
\nabla h(x) - \nabla h(x')\cdot x$, we shift the starting policy to $x'$.  In
the case of the negative entropy over the simplex, this is equivalent to
instead using a Kullback-Leibler divergence.  Note, the step size schedule is
an important practical factor that needs to be considered more carefully.  In
particular, smaller step sizes keep iterates closer too the initial policy.

\section{Convergence of the Current Policy}

Some work has considered using CFR's final policy as opposed to the
average policy.  Experimentally, it is shown that the current policy
works quite well in practice despite the lack of bounds on its performance.

A large assortment of recent work on stochastic convex optimization has
considered this problem in depth exploring different averaging and step size
selection schemes.  When we view the problem from the convex optimization
viewpoint these results transfer without modification.  In particular, it has
been shown that using a step size decreasing like $1/\sqrt{t}$ will lead to
convergence of the current iterate for non-smooth optimization.  That is, if
the opponent plays a best response, like in CFR-BR, we need not store the
average policy reducing the memory requirement by 50\%.

\section{Acceleration and Sampling}

An important reason that no-regret algorithms have emerged as the dominant
approach for large-scale equilibrium computation is their amenability to a
various forms of sampling.  At a high level, by introducing stochasticity we
can drastically reduce the computation on each iteration by introducing
different types of sparsity while only marginally increasing the number of
necessary iterations.

\begin{theorem}[from \cite{cesa-bianchi06}]
Let the sequence $x^t$ be chosen according to a no-regret algorithm with regret
bound $\sqrt{CT}$ and let $\tilde{x}^t \sim x^t$.  For all $\delta\in(0,1)$,
with probability at least $1-\delta$ the regret of the sequence $\tilde{x}^t$
is bounded by $\sqrt{CT} + \sqrt{T/2\log1/\delta}$.
\end{theorem}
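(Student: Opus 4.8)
The plan is to decompose the regret of the sampled sequence into the deterministic regret of the distribution sequence, which is already controlled by hypothesis, plus a zero-mean stochastic fluctuation that I will control with a martingale concentration inequality. For any fixed comparator $x^*$ I would write
\begin{align*}
\sum_{t=1}^T u^t\cdot x^* - u^t\cdot\tilde{x}^t = \sum_{t=1}^T \left(u^t\cdot x^* - u^t\cdot x^t\right) + \sum_{t=1}^T \left(u^t\cdot x^t - u^t\cdot\tilde{x}^t\right).
\end{align*}
The first sum is at most $\sqrt{CT}$ for every $x^*$ by the assumed regret bound, while the second sum, call it $S_T$, does not depend on $x^*$. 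Taking the maximum over $x^*$ therefore yields the deterministic inequality $\max_{x^*}\sum_{t=1}^T(u^t\cdot x^* - u^t\cdot\tilde{x}^t) \le \sqrt{CT} + S_T$, so it suffices to bound $S_T$ with high probability. The point worth emphasizing is that pulling $S_T$ outside the maximum handles the fact that the best comparator for the sampled sequence may itself be data-dependent.

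Next I would establish that the increments $Z_t = u^t\cdot x^t - u^t\cdot\tilde{x}^t$ form a martingale difference sequence. Let $\mathcal{F}_{t-1}$ be the $\sigma$-algebra generated by the draws $\tilde{x}^1,\ldots,\tilde{x}^{t-1}$. Because the no-regret algorithm commits to the distribution $x^t$ and the reward vector $u^t$ is revealed before $\tilde{x}^t$ is sampled, both $x^t$ and $u^t$ are $\mathcal{F}_{t-1}$-measurable; since $\tilde{x}^t\sim x^t$ we have $\mathbb{E}[u^t\cdot\tilde{x}^t \mid \mathcal{F}_{t-1}] = u^t\cdot x^t$, hence $\mathbb{E}[Z_t\mid\mathcal{F}_{t-1}] = 0$. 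Moreover, conditioned on the past, $u^t\cdot\tilde{x}^t$ takes one of the coordinate values of $u^t$, each lying in $[0,1]$ under the standard normalization, so the conditional range of each $Z_t$ is at most one.

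I would then apply the Hoeffding--Azuma inequality in the form that depends on the width of the increments rather than an absolute bound: for a martingale whose increments each have conditional range at most $d_t$, the sum exceeds $\epsilon$ with probability at most $\exp(-2\epsilon^2/\sum_t d_t^2)$. With $d_t = 1$ and $\sum_t d_t^2 = T$, setting the right-hand side equal to $\delta$ gives $\epsilon = \sqrt{(T/2)\log(1/\delta)}$. Thus with probability at least $1-\delta$ we have $S_T \le \sqrt{(T/2)\log(1/\delta)}$, and combining with the deterministic inequality above completes the argument.

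The step I expect to require the most care is the martingale setup, specifically arguing that $u^t$ is determined before the current action is sampled, so that the oblivious-adversary style conditioning is valid, and recognizing that the relevant quantity is the conditional \emph{range} of $Z_t$ rather than its absolute magnitude. Getting this range exactly equal to one is what produces the constant $1/2$ inside the square root and matches the claimed bound; a careless absolute bound $|Z_t|\le 1$ would instead yield the weaker constant $\sqrt{2T\log(1/\delta)}$.
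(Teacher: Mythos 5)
Your proof is correct, and it is essentially the argument behind this result in the cited source (Cesa-Bianchi and Lugosi's Lemma 4.1); the paper itself offers no proof, simply importing the theorem. Your decomposition into the algorithm's deterministic regret plus a martingale term, the observation that the fluctuation term is comparator-independent so it survives the maximum over $x^*$, and the use of Hoeffding--Azuma with conditional \emph{range} one (since, given the past, $Z_t = u^t\cdot x^t - u^t\cdot\tilde{x}^t$ lies in an interval of length $\max_i u^t_i - \min_i u^t_i \le 1$) are exactly what produce the stated constant $\sqrt{(T/2)\log(1/\delta)}$.
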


By sampling $\tilde{y}$ from $y$, now we choose $u^{t+1} = A\tilde{y}^t$.  That
is, $\tilde{y}^t$ is a standard basis vector and $u^{t+1}$ is just a column of
$A$~\cite{lanctot09}.  This has a number of important computational
consequences.  First, we no longer require a matrix-vector multiplication
reducing the complexity of each iteration to linear from quadratic.  In
addition to the asymptotic reduction in complexity, we improve the constants
since selecting a column of $A$ requires no arithmetic operations.  In fact, we
may not even need to store $u^{t+1}$ if we can index directly into $A$.

A second important computational improvement is we can now often use integral
numbers in place of floating-point numbers~\cite{gibson13}.  In particular,
note that $r^t = u^t - u^t\cdot \tilde{x}^t e$ is integral so long as $u^t$ is.
Furthermore, $\bar{x}$ can be represented as a vector of counts--the number of
times each action is sampled.  By using regret-matching, even computing $x^t$
can be done completely with integers as no floating-point math is required to
convert $R^T$ to $x^{t+1}$.

Another form of sampling is possible when nature participates in the
game~\cite{zinkevich08}.  For example, imagine that $A = \sum_{i=1}^p A_i$ and
we can implicitly access each $A_i$.  This is the case when nature rolls a die
or draws a card from a shuffled deck.  Instead of explicitly forming and
storing $A$, or examining each $A_i$ on every iteration, we can choose one
randomly.  That is, $\tilde{A}^t = pA_{i^t}$, where $i^t \sim
\text{Uniform}([p])$.  When we cannot store $A$ this form of sampling reduces
each iteration's asymptotic complexity from linear in $p$ to constant.

Nesterov's excessive gap technique (\citeyear{nesterov05}) cannot handle
stochastic gradients.  This is the primary reason that it and the
\citeauthor{hoda08} divergence fell out of favor for equilibrium computation.
As we note here, the divergence itself has nothing to do with the inability to
handle stochasticity.  It is a power tool enabling us to consider a wide
variety of gradient methods.  The stochastic mirror prox (SMP) algorithm of
\citeauthor{juditsky11} (\citeyear{juditsky11}) is an extension of the
extra-gradient method that achieves a rate of $O(L/T + \sigma/\sqrt{T})$ on
saddle-point problems like ours.  Here, $\sigma$ is the variance of the
gradient.  This rate is optimal.  Specifically, it enables us to trade off
between the variance of the sampling technique and the slower $1/\sqrt{T}$ rate
that CFR achieves.  Current sampling procedures favors low computation with
sparse updates.  It is unlikely that using them along side SMP will work well
out-of-the-box.  Further investigation is needed to determine if a successful
compromise can practically improve performance.

{\small
\bibliographystyle{aaai}
\bibliography{paper}
}

\end{document}